\newenvironment{coden}
	{\begin{tt}\begin{tabbing}12345\=12\=12\=12\=12\=12\=\kill}
	{\end{tabbing}\end{tt}}
\newcommand{\caplab}[2]{\caption{\label{#1} #2}}
\newcommand{\tabbegin}[1]{\begin{table*}[#1]\centering}
\newcommand{\tabend}{\end{table*}}
\newcommand{\figbegin}[1]{\begin{figure*}[#1]\centering}
\newcommand{\figend}[2]{\caplab{#1}{#2}\end{figure*}}
\begin{document}

\title{A Combinatorial Optimisation Approach to
 Designing Dual-Parented Long-Reach Passive Optical Networks\thanks{This paper was supported by Science Foundation Ireland under grant 08/CE/I1423.
}}
% 
%  \author{Hadrien Cambazard\inst{1} \and Deepak Mehta\inst{1} \and Barry O'Sullivan\inst{1} \and Luis Quesada\inst{1} 
% \and Marco Ruffini\inst{2} \and David Payne\inst{2} \and Linda Doyle\inst{2}}
% \institute{Cork Constraint Computation Centre,
%  University College Cork, Ireland 
%  \email{\{h.cambazard|d.mehta|b.osullivan|l.quesada\}@4c.ucc.ie}
% \and 
%  CTVR, University of Dublin
%  Trinity College, Ireland
% }

\author{Hadrien Cambazard, Deepak Mehta, Barry O'Sullivan, and Luis Quesada\inst{1} \\[0.25em]
Marco Ruffini, David Payne, and Linda Doyle\inst{2}}
\institute{CTVR \& Cork Constraint Computation Centre,
 University College Cork, Ireland
 \email{\{h.cambazard|d.mehta|b.osullivan|l.quesada\}@4c.ucc.ie} \\[0.5em]
\and  CTVR, University of Dublin, Trinity College, Ireland
\email{ruffinm@tcd.ie|david.b.payne@btinternet.com|ledoyle@tcd.ie}
}
\maketitle

\begin{abstract}
We present an application focused on the design of
  resilient long-reach passive optical networks.
We specifically consider dual-parented networks whereby each customer
  must be connected to two metro sites via local exchange sites.
An important property of such a placement is resilience to
  single metro node failure.
The objective of the application is to determine the optimal position
  of a set of metro nodes such that the total optical fibre length
  is minimized.
We prove that this problem is NP-Complete.
We present two alternative combinatorial optimisation
  approaches to finding an optimal metro node placement using:
  a mixed integer linear programming (MIP) formulation of the problem;
  and, a hybrid approach that uses clustering as a preprocessing step.
 We consider a detailed case-study based on a network for Ireland.
The hybrid approach scales well and finds solutions that are close
  to optimal, with a runtime that is two orders-of-magnitude better than
  the MIP model.
%The local search approach is consistently good on all benchmarks.

 \end{abstract}

\section{Introduction}
Over the past decade telecommunications network traffic has grown exponentially
 at an average annual rate above $75\%$ prompted 
by a multitude of new on-line content sharing applications such as Facebook and YouTube. Although the forecast for traffic growth 
over the next 5 years 
is reduced, it  still suggests an average annual compound rate of about $37\%$ with Internet video applications growing at 
about $47\%$.
As High Definition (HD) and 3D video will increasingly be delivered over the Internet, such forecasts do not seem to over-estimate
the traffic scenario.  Additionally, delivering high peak data rates becomes increasingly important for  delivering satisfactory quality of experience, especially for real-time services.   Fiber-To-The-Premises (FTTP), and
 in particular Fiber-To-The-Home (FTTH), seems to be the only solution capable of providing scalable access bandwidth for the foreseeable future.

Passive Optical Networks (PONs) are widely recognized as an economically viable solution to deploy FTTP and FTTH, by virtue of the ability to share costly equipment and fibre among a number of customers. In particular,  the Long-Reach PON (LR-PON) is gaining interest.
LR-PON provides  an economically viable solution as
the number of active network nodes can be reduced by  two orders-of-magnitude and all electronic data processing can be removed from the local exchange sites, thereby reducing both cost and energy consumption \cite{PayneECOC09}. %Increasing the number of customers per PON from 32 to about 500 or even 1000 also increases equipment and fibre sharing which further reduces the cost. 
However, a major fault occurrence like a complete failure of  a single metro node
 that terminates the LR-PON could affect tens of thousands of customers.
Therefore, protection against a metro node failure is of primary importance
 for the LR-PON-based architecture.

\begin{figure}[t]
\centering
\includegraphics[width=0.75\textwidth]{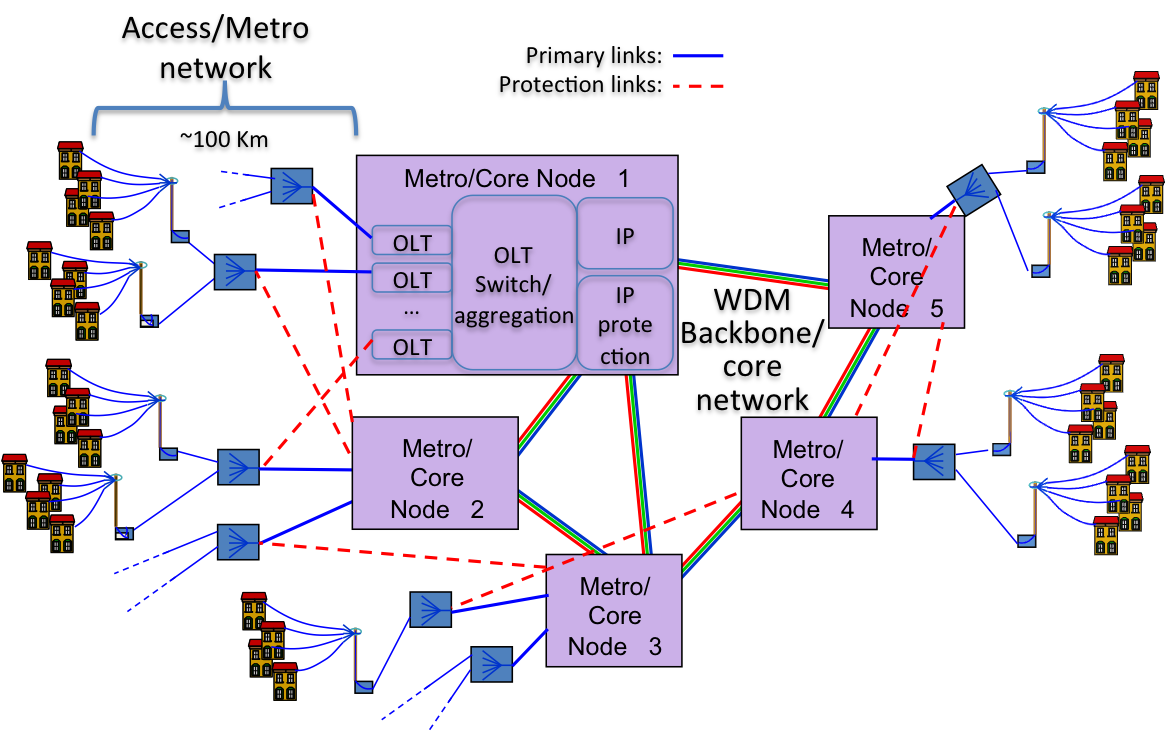}
\caption{Example of a LR-PON showing both access/metro and backbone/core networks}
\label{fig:PON-network}
\end{figure}

A basic and effective protection mechanism for LR-PON is to dual parent each system onto two metro/outer core nodes \cite{PON-amplified-protection,HunterJOptNetw2007}. 
This is similar to a simple protection solution
 for IP routers known as double or redundant protection~\cite{DeMaesschalckComMag2002}.
Figure \ref{fig:PON-network} shows an example of a PON network, together with its Wavelength-division Multiplexing (WDM) backbone interconnections. Each PON is dual-parented, with the dashed lines representing the protection links. In this work we have considered protection links up to the first PON split (or local exchange site), leaving the ``last mile'' unprotected. This is a common choice for residential customers, while protection can be extended to the user premises for business customers.
For example, considering Figure~\ref{fig:PON-network}, if metro node 1 fails, its PONs will be protected by
metro node 2. Of course, node 2 needs to be over-provisioned with much larger IP capacity
in order to protect the additional load~\cite{RuffiniECOC10}.
Providing such a protection mechanism can significantly increase network overall cost
because fibre deployment is a significant contributor to the total cost of the PON installation.  
Therefore, we focus on the problem of finding an optimal set of  positions  of $k$ metro nodes 
such that the cost of connecting optical fibres between  metro nodes and  exchange sites
is minimized. 
The set of possible positions
available for a metro node
is the set of positions associated with  the existing old local exchange sites.

\section{Problem Formalization}
\label{sec:prbformal}
We formally describe the problem
of LR-PON deployment for a real geography based on the  data provided by the Irish incumbent operator. 
%We also present the complexity of the problem
% and its relation with the well known k-medoids problem.
More precisely, 
we present the definition and the complexity of the so called {\it Single Coverage Problem} where each exchange site is only connected to a single metro node
and then present the definition and complexity of the {\it Double Coverage Problem} where each exchange site is connected to two metro nodes. 

\begin{definition}[Single Coverage Problem]
An instance of the Single Coverage Problem (SCP)  is defined by $\langle A,B,c,k,\phi \rangle$, where
  $(A,B)$ is a complete bipartite graph with cost function $c$ such that  $c_{ij}$ is the cost of allocating node $i \in A$ to node $j \in B$,  
  $k$ is an integer value such that $k \leq |B|$ and $\phi$ is some real value.
An allocation from $A$ to   $M \subseteq B$ maps each node $i$ of A to the cheapest node $j$ of $M$ such that $j = \arg\min_{j \in M} c_{ij}$
and $k=|M|$. The total cost of the allocation is the sum of the allocation cost of each node of $A$. The problem is to verify whether there exists a subset of $k$  nodes of $B$ such that the total cost is less than or  equal to $\phi$.

\end{definition}

\begin{proposition}
The Single Coverage Problem is  NP-Complete. 
\end{proposition}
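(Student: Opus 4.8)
The plan is to prove NP-completeness of the Single Coverage Problem by first establishing membership in NP, then giving a polynomial-time reduction from a known NP-complete problem. Membership in NP is the easy direction: given a candidate subset $M \subseteq B$ with $|M| = k$, one can verify in polynomial time that the induced allocation (mapping each $i \in A$ to its cheapest node in $M$) has total cost at most $\phi$, simply by computing $\sum_{i \in A} \min_{j \in M} c_{ij}$ and comparing it to $\phi$. So the certificate is the set $M$ itself, and verification is a straightforward sum of minima.

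For the hardness direction, the natural candidate is a reduction from the classical \emph{$p$-Median Problem} or, more fundamentally, from a well-structured covering/location problem. The structure of SCP---selecting $k$ out of $|B|$ facility locations so that the sum of each client's distance to its nearest open facility is minimized---is essentially the decision version of the uncapacitated $p$-median problem, which is known to be NP-complete. First I would recall that the $p$-median decision problem is NP-complete even on metric instances, and then exhibit an identity (or near-identity) transformation: given a $p$-median instance, set $k = p$, let $A$ be the client set, $B$ the set of candidate facility sites, $c_{ij}$ the given distances, and $\phi$ the $p$-median cost bound. Since SCP's allocation rule ``map each node to the cheapest open node'' coincides exactly with the $p$-median objective of assigning each client to its nearest open facility, the two decision problems have identical yes-instances, and the reduction is computable in linear time.

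If a reduction from an even more elementary problem is preferred, I would instead reduce from \textbf{Dominating Set} or \textbf{Vertex Cover}, encoding graph structure into the cost matrix $c$ by using a small cost (say $0$ or $1$) when node $i$ is ``covered'' by candidate $j$ and a large penalty cost otherwise; choosing $k$ to match the target cover size and $\phi$ to force every element of $A$ to be cheaply covered then makes a low-cost allocation correspond exactly to a dominating set of size $k$. The forward and backward directions of correctness would follow by checking that a valid cover yields an allocation of cost at most $\phi$ and, conversely, that any allocation below the threshold cannot afford any ``uncovered'' penalty and hence picks a genuine dominating set.

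The main obstacle I anticipate is not the combinatorial core but the correctness bookkeeping around the cost thresholds in the gadget-based reduction: one must choose the large penalty cost large enough (as a function of $|A|$ and $|B|$) that a single uncovered node already pushes the total above $\phi$, while keeping the whole construction polynomially bounded so that the reduction remains polynomial-time. Getting these inequalities tight---so that \emph{yes}-instances map to \emph{yes}-instances and \emph{no}-instances provably cannot sneak under the bound---is the delicate step. If instead the direct reduction from $p$-median is used, the obstacle essentially vanishes and the only care needed is to cite a version of $p$-median whose NP-completeness is established without presupposing results this paper itself is trying to prove.
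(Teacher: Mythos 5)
Your proposal is correct, and it bundles two routes. Your fallback route (reduce from \textsc{Dominating Set}/\textsc{Vertex Cover} with cost $0$ for ``covered'' pairs, a penalty cost otherwise, and $\phi$ chosen to forbid any penalty) is essentially the paper's own proof: the paper reduces from \textsc{Hitting Set}, of which \textsc{Vertex Cover} is the special case with sets of size two, using exactly the $0/1$ cost matrix, $k=m$, and $\phi=0$. Note that your worry about tuning the penalty magnitude evaporates in this setting: with $\phi=0$ a penalty of $1$ for a single uncovered node already exceeds the threshold, which is precisely why the paper's construction needs no bookkeeping. Your primary route---observing that SCP \emph{is} the decision version of the uncapacitated $p$-median problem and invoking its known NP-completeness (e.g.\ Kariv and Hakimi)---is a genuinely different and legitimate argument; it buys a one-line reduction (an identity map) at the price of resting on an external hardness result from the location-theory literature, whereas the paper's gadget reduction from \textsc{Hitting Set} (cited to Garey and Johnson) is self-contained and elementary. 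Both approaches also share the easy NP-membership argument you give, which the paper leaves implicit.
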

\begin{proof}
A reduction from \textsc{Hitting Set Problem}, which is known to be NP-complete \cite{Garey:1979}, is obtained as follows: given a collection $C$ of subsets of a finite set $S$ and a positive integer $m \leq |S|$, the \textsc{Hitting Set} problem is to decide whether there is a subset $S' \subseteq S$ with $|S'| \leq m$ such that $S'$ contains at least one element from each subset in $C$. The reduction to SCP, $\langle A,B,c,k,\phi \rangle$, goes as follows. We have a node in $A$ for each set $S_i$ in $C$ and a node in $B$ for each $j \in S$ . The cost of all edges $(i,j)$ is either 0 if $j$ is in $S_i$ or $1$ otherwise. We set $\phi=0$ and $k=m$. The constructed instance of SCP has a solution of cost 0 if and only if there exists a hitting set of size $m$ for $C$.
$\hfill\blacksquare$
\end{proof}
%\textbf{todo: in practice we have $A \subseteq B$ so always less values than variables but that does not change the proof of \ref{manu}}

\begin{definition}[Double Coverage Problem]
An instance of the Double Coverage Problem (DCP) is also  defined by $\langle A,B,c,k,\phi \rangle$, where
  $(A,B)$ is a complete bipartite graph with cost function $c$, where  $c_{ij}$ is the cost of allocating node $i \in A$ to node $j \in B$,  
  $k$ is an integer value such that $k \leq |B|$ and $\phi$ is some real value.
An allocation from $A$ to   $M \subseteq B$ maps each node $i$ of $A$ to the cheapest node $j_1$ of $M$ such that $j_1 = \arg\min_{j_1 \in M} c_{ij_1}$, and to the second cheapest node $j_2$ of $M$ such that $j_2 = \arg\min_{j_2 \in M|j_2 \neq j_1} c_{ij_2}$. The total cost of the allocation is the sum of the allocation costs of each node of $A$ to \textbf{two} nodes of $B$. The problem is to verify whether there exists a subset of $k$ nodes of $B$ such that the total cost is less than or  equal to $\phi$.
\end{definition}
%Moreover we will consider two extensions of this problem. A first extension is dealing with an initial incomplete bipartite graph (due to the distance constraint). A second extension will consider a weight on each node of $A$ and enforces that the sum of the weights of all nodes connected to the same node in $B$ to be bounded by a constant (the maximum load constraint).

\begin{proposition}
The Double Coverage Problem is NP-Complete. 
\end{proposition}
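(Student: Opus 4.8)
The plan is to show membership in NP and then give a polynomial reduction from the Single Coverage Problem (SCP), which the previous proposition establishes to be NP-complete. Membership in NP is immediate: a certificate is a subset $M \subseteq B$ with $|M| = k$, and given $M$ one computes for each $i \in A$ the two smallest values $c_{ij}$ over $j \in M$, sums them, and checks the total against $\phi$, all in polynomial time.

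For hardness, the idea is to force every node of $A$ to obtain one of its two mandatory connections ``for free'', so that the remaining connection exactly reproduces a single-coverage allocation. Given an SCP instance $\langle A,B,c,k,\phi\rangle$, I would build a DCP instance $\langle A',B',c',k',\phi'\rangle$ as follows. Introduce a universal zero node $z$ with $c'_{iz}=0$ for all $i\in A$, keep $c'_{ij}=c_{ij}$ on the original edges, and set $k'=k+2$ and $\phi'=\phi$. If $z$ is selected, then $z$ serves as the cheapest (cost $0$) connection of every $i\in A$, and the second-cheapest connection ranges over the remaining selected nodes of $B$; choosing those remaining nodes to be a $k$-subset $M\subseteq B$ makes the DCP cost equal to $\sum_{i\in A}\min_{j\in M}c_{ij}$, i.e.\ the SCP cost of $M$.

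The main obstacle is that a single universal zero node is not forced into the optimal solution: since DCP charges the two cheapest connections, spending a slot on $z$ competes against spending it on an extra node of $B$ that could lower a second-cheapest connection, so $z$ need not appear in an optimal $M'$. I would remove this difficulty with a small enforcer gadget. Add one node $a'$ to $A$ and one node $z'$ to $B$, put $c'_{a'z}=c'_{a'z'}=0$ while $c'_{a'j}=H$ for every other $j$, and $c'_{iz'}=H$ for every original $i\in A$, where $H$ is chosen larger than $\phi$ and than all original costs $c_{ij}$. Now $a'$ can attain a finite (indeed zero) cost only if both $z$ and $z'$ lie in $M'$: any solution omitting either of them forces $a'$ to pay at least $H>\phi$, exceeding the budget. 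Because $z'$ is expensive ($=H$) for every original node, it is never used as a second-cheapest connection by those nodes, so it plays no role beyond being a forced companion of $z$.

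It then remains to verify the two directions of the equivalence, which I expect to be routine. For the forward direction, an SCP solution $M$ of cost at most $\phi$ yields the DCP solution $M'=M\cup\{z,z'\}$ of size $k+2$ whose cost equals that of $M$ (with $a'$ contributing $0$). For the converse, any DCP solution $M'$ of cost at most $\phi'=\phi$ must, by the gadget, contain both $z$ and $z'$; writing $M'=\{z,z'\}\cup M$ with $M$ a $k$-subset of $B$, the cost of $M'$ equals the SCP cost of $M$, which is therefore at most $\phi$. Hence the two instances are equivalent, the reduction is clearly polynomial, and NP-completeness of DCP follows.
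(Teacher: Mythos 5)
Your proof is correct, and it rests on the same core idea as the paper's: reduce from SCP by adding to $B$ a node that is the universally cheapest choice, so that every node of $A$ spends its first mandatory connection on that node and the second-cheapest connection exactly reproduces a single-coverage allocation. Where you diverge is in how that node's presence is guaranteed. The paper adds a single node $s$ with uniform cost $\beta < \min_{i \in A, j \in B} c_{ij}$, sets $k' = k+1$, and shifts the budget to $\phi + |A|\beta$; it never forces $s$ into the solution, relying instead on accounting: the cheapest-connection costs always total at least $|A|\beta$, so the second-cheapest allocations cost at most $\phi$ (and if $s \notin M'$, dropping an arbitrary node of $M'$ yields a $k$-subset whose cheapest allocation is dominated by the second-cheapest allocation in $M'$). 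You instead keep $\phi' = \phi$, use a zero-cost node $z$, and add an enforcer gadget ($a'$ and $z'$, with $k' = k+2$) to force $z$ into every feasible solution. The obstacle you identify is genuine as a critique of the naive equivalence claim, but your gadget is heavier machinery than strictly necessary: with nonnegative costs the dropping argument above already closes the gap. Note also that your statement that $a'$ paying $H > \phi$ ``exceeds the budget'' tacitly assumes the remaining costs are nonnegative (harmless for the SCP instances arising from \textsc{Hitting Set}, and fixable in general by choosing $H$ larger than $\phi$ plus the sum of all $|c_{ij}|$). What your version buys is rigor: you explicitly verify membership in NP, which the paper omits entirely, and the forced inclusion of $z$ makes the backward direction airtight without the case analysis the paper glosses over.
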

\begin{proof}
We can reduce \textsc{SCP}, which was proved to be NP-complete,  to \textsc{DCP} by adding one extra node  to $B$ and setting the cost function accordingly. More precisely, let $B'=B \cup \{s\}$. Let $c'$ be the cost function such that $c'_{ij}= c_{ij}$ if $i \in A$ and $j \in B$,
otherwise $c'_{ij} = \beta$ such that $\beta < \min_{i \in A,j \in B} c_{ij}$. Solving the \textsc{SCP} instance $\langle A,B,c,k,\phi\rangle$ is equivalent to solving the \textsc{DCP} instance $\langle A,B',c',k+1,\phi+|A|\times \beta\rangle$.
Notice that any solution of the \textsc{SCP} instance can be transformed into a  \textsc{DCP} solution by setting $s$ as the cheapest node for every node in  $A$ and making the \textsc{SCP} allocation equivalent to the allocation of the second cheapest node in the  \textsc{DCP} instance. Similarly, any solution of the \textsc{DCP} instance can be transformed into a  \textsc{SCP} solution by ignoring the cheapest node since the cost associated with the allocation of the cheapest nodes is bound to be equal to or greater than $|A|\times \beta$, making the  allocation of the second cheapest node a valid solution of the \textsc{SCP} instance.
%the cost associated with the allocation of the cheapest node for every node in $A$
%{\bf why these two instances are equivalent ? }
%The single coverage problem $G(A_1,B_1)$ can be reduced to the double coverage problem $G(A_2,B_2)$  by defining the set $B_2$ as the set of all possible pairs of points in $B_1$.
$\hfill\blacksquare$
\end{proof}

In this paper we  focus on the double coverage problem where both  $A$ and $B$ are sets of exchange sites.
Let $E$ be a set of exchange sites whose locations are fixed.
In Figure~\ref{fig:Ireland}  all the points are locations of exchange
 sites in Ireland.\footnote{Notice that some points are outside the boundary of Ireland. This is because of the projection of the map of Ireland we are using in this figure.}
Let $l_i$ be the load of the exchange site $i \in E$ which is equivalent to the number of customers that are connected to the
exchange site $i$.
 \begin{floatingfigure}[r]{6.0 cm}
 \scalebox{1.00}{
      \includegraphics[width=5.3cm]{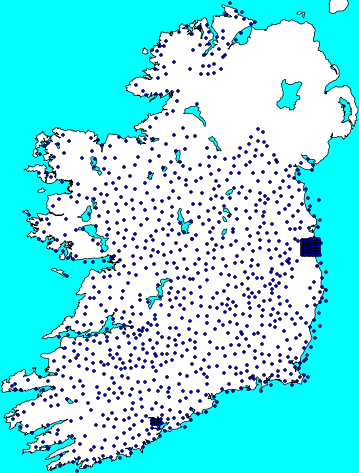} }
\caption{Exchange Sites in Ireland}
\label{fig:Ireland}
 \end{floatingfigure}
Let $k$ be the number of metro nodes that are required to be placed in Ireland. 
A metro node can be placed at any position where an exchange site is located. 
Thus, the set of positions available for each metro node is the set of positions of all the exchange sites. 
Let $d$ be a matrix where $d_{ij}$ denotes the Euclidean distance between the positions of exchange sites  $i$ and  $j$.
In order to account for the fact that the amount of fibre needed to connect two network points is usually larger than their Euclidean distance, because fibre paths generally follow the layout of the road network, a routing factor of $1.6$ is applied.
%
% \begin{figure}[pth]
% \centering
% \includegraphics[width=0.4\linewidth]{Figures/ess.png}
% \caption{Exchange Sites in Ireland}
% \label{fig:Ireland}
% \end{figure}
Let $c_{ij}$ be the cost of connecting exchange site $i$ to a metro node placed  at the location of an exchange site $j \in E$, 
which is computed as follows:
\[ c_{ij} = 1.6 \times d_{ij} \times \alpha_i \times l_i. \]
This cost model is  based on the work of one of the authors while working at
 BT~\cite{PayneECOC09}.
Here $\alpha_i$ is constant and its  value  is dependent on the load of the exchange site $i$.
The value of $\alpha_i$ decreases as the load increases  
since sharing of the fibre increases.
The aim is to determine the positions of $k$ metro nodes such that each exchange site is connected to two metro nodes and
the sum of the costs of the connections between exchange sites and their respective metro nodes is minimized. 
%The exact details of the computation of $\alpha_i$ cannot be disclosed

% 
% \begin{quote}
% \textbf{Instance:} A complete bipartite graph $G(E,E)$ with cost function $c$, where $c_{ij}$ is the cost of allocating node $i \in E$ to node $j \in E$ and two integers $m \leq |E|$ and $\lambda$.\\
% \textbf{Question:} An allocation from $E$ to a subset $M \subseteq E$ maps each node $i$ of $E$ to the cheapest node $j$ of $M$ so that $j = \arg\min_{j \in M} c_{ij}$. The cost of the allocation is the sum of the allocation cost of each node of $E$. Does there exist a subset of $m$ nodes of $E$ defining an allocation of cost less than or  equal to $\lambda$ ?
% \end{quote}

\section{MIP Model}
\label{sec:mip}
 The objective is to place a number of metro nodes  such that  the cost  of the connection  between the local exchanges and their corresponding  metro nodes is minimized. The closest metro node of an exchange site is called the primary metro node while the second closest is called 
the secondary metro node.

\paragraph{Constants.}
Let $E$ be a set of exchange sites whose locations are fixed. 
Let $k$ be the number of metro nodes whose positions are to be determined.
Let $c_{ij}$ be the cost of connecting an exchange site $i$ to a metro node placed at the location of an exchange site  $j \in E$.

\paragraph{Variables.}

$\forall (i,j) \in E \times E, x_{ij} \in \{0,1\}$ denotes whether exchange site  $i$ is connected to a metro node $j$. 
$\forall j \in E, y_{j} \in \{0,1\}$ denotes whether  $j$ is used as a metro node.

\paragraph{Constraints.}

Each exchange site $i \in E$ should be connected to two metro nodes:
\begin{equation}
\label{con:m2}
 \forall i \in E: \sum_{j \in E} x_{ij} = 2. 
\end{equation}

Constraint ({\ref{con:m2}) implicitly enforces that the primary and secondary metro nodes of exchange site $i$ should be different. 
For each exchange site $i \in E$ its primary and secondary metro nodes can be inferred based on the 
costs of connecting $i$ to the metro nodes respectively. 
If the  metro node connected to an exchange site $i \in E$ is placed at the location of exchange site $j$ then
$y_j$ is one:
   \begin{equation}
\label{con:dep}
 \forall i,j \in E:  y_j \geq x_{ij}. 
   \end{equation}
The number of used locations for metro nodes should be  equal to $k$:
\[  \sum_{j \leq n} y_j = k. \]
The number of constraints of type (\ref{con:dep}) is ($n^2$) which can grow quickly  for large values of $n=|E|$, in which case they can be replaced by the following  weaker constraint:
\[ \forall j \in E: |E| \times y_j \geq \sum_{i \leq n} x_{ij}. \]

\paragraph{Objective.}

The  objective is to minimize the cost of the connection  between local exchanges and their corresponding metro nodes, i.e, 
\[  \min  \sum c_{ij} \times x_{ij}.  \]

\section{Cluster-Based Sampling}
\label{sec:hybrid}
For the {\scshape MIP} model, as presented in the previous section,
the  set of positions of all exchange sites is considered as
the domain of the  metro node position  for each exchange site. 
This may  prohibit us from solving the problem optimally as the size of the set of the positions of the exchange sites increases. 
%scalability of this approach as will be shown in the experimental section of the paper.  
%
In order to overcome this scalability issue both in terms of time and space, we propose a heuristic approach as a preprocessing step for selecting a small subset of metro node positions for each exchange site and then use the {\scshape mip} model to solve the problem optimally.

One simple approach to overcome this  could be to limit the number of metro node positions  of each exchange site 
based on their distances from the exchange site. More precisely, select $k$ closest/cheapest metro node positions 
for each exchange site  $e$.  
This heuristic approach is called {\it $k$-cheapest neighbours ({\scshape KCN})}.
One of the drawbacks of this approach is that the resulting problem can be inconsistent especially when $k$ is small. 
Therefore, it is important to find a  value of $k$ such that the problem is  satisfiable. 
  Another issue is that  an optimal  solution of the resulting problem may not be of good quality
  despite the problem being satisfiable depending on the value of $k$.
Obviously when $k= |E|$ we will always find the best  solution but at the expense of more time.
 There is a trade-off between the value of $k$ and the time required to find a good solution.

 \begin{floatingfigure}[r]{7.0 cm}
 \scalebox{1.00}{
      \includegraphics[width=6.0cm]{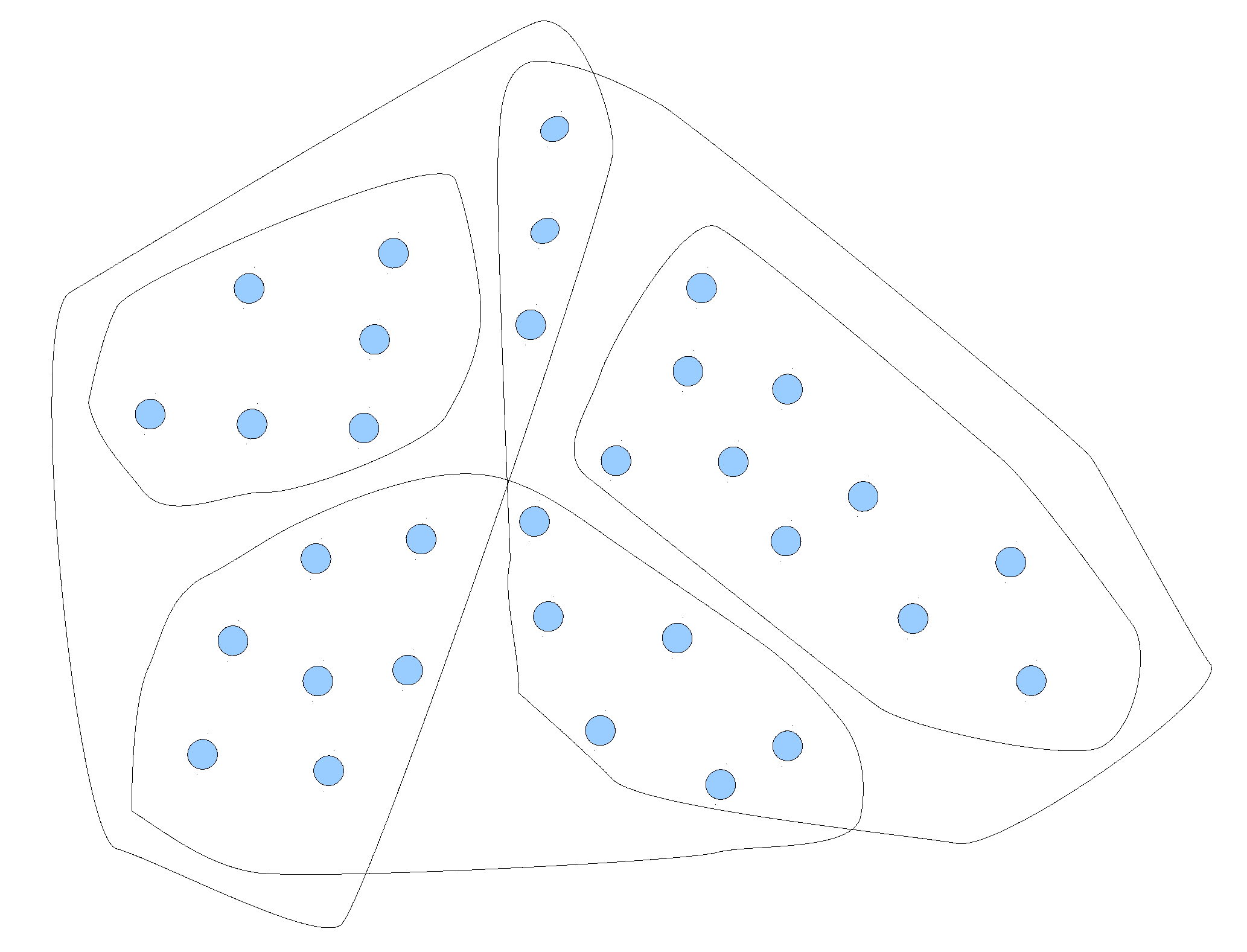} }
\caption{Five overlapping clusters. \label{fig:clusters}}
 \end{floatingfigure}

We  propose a new approach for computing a sample  of  positions where  a metro node can be placed at a given exchange site. 
This heuristic approach is  called {\em cluster-based sampling ({\scshape CBS})}.
The pseudo-code is depicted in Algorithms~\ref{alg:cluster} and \ref{alg:sample}.
The general idea is to apply a variant of the  k-means  algorithm \cite{Lloyd82leastsquares} for computing $k$ clusters of  exchange sites.
Whenever a local minimum is reached within the algorithm, a best exchange site position,
 based on some criterion, is selected from each cluster as a possible
  location of the metro node for all the exchange sites within that cluster. 
 A  sample of positions for each exchange site is computed  by  repeating this process  a given number of times.  
The cardinality of  this set  is considerably smaller than the full set of positions of exchange sites. 
Since each exchange site should be connected to two metro nodes the algorithm for weighted k-means clustering
is adapted
 to ensure that each exchange site is in exactly two clusters.

  \begin{algorithm}[t]
  \footnotesize \caption{\textsc{computeOverlappingClusters}( $E$, $k$ )
  \label{alg:cluster}}
  \begin{scriptsize}
  \begin{coden}
      cost  $\leftarrow \infty$ \\
      select $k$ points randomly from $E$ and assigned them to    $m_1, m_2, \ldots, m_k$ \\
      loop $\leftarrow$ {\bf True} \\ 
     {\bf While} loop  {\bf do}\\
     %\> \> $P,S \leftarrow$  computeOverlappingClusters$(k)$\\
    %\> \> {\bf For} $i = 1,\ldots, k$ {\bf do} \\
     \>  $\forall i \leq k:$ $P_i \leftarrow \{x_j| \forall i* \leq k:$   dist$(x_j,m_i) \leq$ dist$(x_j,m_{i*})     \} $\\
    %\> \>{\bf For} $i = 1,\ldots, k$ {\bf do} \\
%       \>$\forall i \leq k:$ $S_i \leftarrow \{x_j| \forall i*, 1 \leq i* \leq k, x_j \in P_l, i* \neq l,$   dist$(x_j,m_i) \leq$ dist$(x_j,m_{i*})     \} $\\
%        \>$\forall i \leq k:$ $S_i \leftarrow \{x_j| \forall i* \leq k: \forall l \leq k: x_j \in P_l  \wedge (i* \neq l) \Rightarrow $   dist$(x_j,m_i) \leq$ dist$(x_j,m_{i*})     \} $\\
       \>$\forall i \leq k:$ $S_i \leftarrow \{x_j| \exists i* \leq k:  $ dist$(x_j,m_{i*}) \leq$ dist$(x_j,m_{i}) \:\: \wedge$ \\
      \> \> \> \> \> \> $\quad \quad \quad \forall l \neq i* : $ dist$(x_j,m_i) \leq$ dist$(x_j,m_{l}) \} $\\

      \> newcost $\leftarrow \sum_{1 \leq i \leq k} \sum_{x_j \in P_i \cup S_i}  W[x_j] \times$dist$(x_j,m_i)$\\
      \> {\bf If} cost $>$ newcost \\
      \> \> cost $\leftarrow$ newcost \\
      \> \> $P* \leftarrow P$, $S* \leftarrow S$ \\
    \> \>{\bf For} $i = 1,\ldots, k$ {\bf do} \\
 \> \> \>    $m_i.X \leftarrow \sum_{x_j \in P_i \cup S_i } W[x_j]\times x_j.X/ \sum_{x_j \in P_i \cup S_i } W[x_j]$  \\
    \> \> \> $m_i.Y \leftarrow \sum_{x_j \in P_i \cup S_i } W[x_j]\times x_j.Y/ \sum_{x_j \in P_i \cup S_i} W[x_j]$  \\
 %   \> \> \> $\langle m_1,\ldots,m_k\rangle \leftarrow$ computeNewMeans$(P,S)$ \\ 
      \> {\bf Else} \\
      \> \> loop $\leftarrow$ {\bf False} \\ 
    {\bf Return} $\langle P*,S* \rangle$
  \end{coden}
 \end{scriptsize}
  \end{algorithm}
 
 Algorithm~\ref{alg:cluster}  computes $k$ overlapping clusters. 
 An example is presented in Figure~\ref{fig:clusters} where the value of $k$ is 5. 
Notice that each point is  present in two clusters.
The algorithm \textsc{computeOverlappingClusters} starts by  selecting $k$ points, $\langle m_1,\ldots,m_k\rangle$, randomly from a given set of sites $E$. These points represent initial $k$ means of the overlapping clusters.
%Notice that each point is denoted in 2-dimensions.
Each $m_i$ is associated with two attributes: $m_i.X$ denotes the $X$ dimension and $m_i.Y$ denotes the $Y$ dimension. 
Initially the cost is set to infinity. 
Each exchange site is assigned to two clusters: the one associated  with the closest mean and another with the second closest mean.
In the algorithm a cluster $i$ is represented by $P_i \cup S_i$
such that $m_i$ is the closest
mean  for each $p \in P_i$  and $m_i$ is the second closest mean for each $p \in S_i$.
We use \texttt{dist$(p_i,p_j)$} to denote the Euclidean distance between the points $p_i$ and $p_j$ and
\texttt{W$[p_i]$} to denote the weight associated with a site $p_i$, which is equivalent to $\alpha_i \times l_i$ for our problem.
The cost is evaluated by  summing  the weighted distances of all the points of the clusters with respect to their corresponding means.
If the new cost is less than the current cost then the new means are calculated for all clusters.
 While the new cost is better than the previous cost 
 the assignment of the exchange sites to two clusters and the update of the means is repeated.
The algorithm returns the tuple  $\langle P*,S* \rangle$.
The complexity of each iteration within the while loop of Algorithm~\ref{alg:cluster} is $\mathcal{O}(n\,k)$, where $n$ is the number of sites and $k$ 
is the required number
of metro nodes (or the number of clusters).

% \begin{figure}[pth]
% \centering
% \includegraphics[width=0.6\textwidth]{Figures/clusters.pdf}
% \caption{5 overlapping clusters. \label{fig:clusters}}
% \end{figure}

\begin{algorithm}[t]
 \footnotesize \caption{\textsc{SamplingPoints}( nbruns, $E$, $k$ )
\label{alg:sample}}
 \begin{scriptsize}
 \begin{coden}
  %  mseed $\leftarrow$ m  \\
   crun $\leftarrow$ 0 \\
   $\forall x_j \in E$, Pos$(x_j) \leftarrow \emptyset$ \\
   {\bf While} crun $<$ nbruns  {\bf do} \\
     \>  crun $\leftarrow$ crun $+ 1$     \\
     \> $\langle P,S \rangle  \leftarrow$  computeOverlappingClusters$(E, k)$\\
%      \> cost  $\leftarrow \infty$ \\
%      \> select $k$ points randomly from $E$ and assigned them to    $m_1, m_2, \ldots, m_k$ \\
%      \> loop $\leftarrow$ {\bf True} \\ 
%     \> {\bf While} loop  {\bf do}\\
%     %\> \> {\bf For} $i = 1,\ldots, k$ {\bf do} \\
%     \> \>  $\forall i \leq k:$ $P_i \leftarrow \{x_j| \forall i*, 1 \leq i* \leq k,$   dist$(x_j,m_i) \leq$ dist$(x_j,m_{i*})     \} $\\
%     %\> \>{\bf For} $i = 1,\ldots, k$ {\bf do} \\
%       \> \>$\forall i \leq k:$ $S_i \leftarrow \{x_j| \forall i*, 1 \leq i* \leq k, x_j \in P_l, i* \neq l,$   dist$(x_j,m_i) \leq$ dist$(x_j,m_{i*})     \} $\\
%      \> \> newcost $\leftarrow \sum_{1 \leq i \leq k} \sum_{x_j \in P_i \cup S_i}  W[x_j] \times$dist$(x_j,m_i)$\\
%      \> \> {\bf If} cost $>$ newcost \\
%      \> \> \> cost $\leftarrow$ newcost \\
%    \> \> \>{\bf For} $i = 1,\ldots, k$ {\bf do} \\
%    \> \> \> \> $m_i.X \leftarrow \sum_{x_j \in P_i \cup S_i } W[x_j]\times x_j.X/ \sum_{x_j \in P_i \cup S_i } W[x_j]$  \\
%    \> \> \> \> $m_i.Y \leftarrow \sum_{x_j \in P_i \cup S_i } W[x_j]\times x_j.Y/ \sum_{x_j \in P_i \cup S_i} W[x_j]$  \\
%  %   \> \> \> $\langle m_1,\ldots,m_k\rangle \leftarrow$ computeNewMeans$(P,S)$ \\ 
%      \> \> {\bf Else} \\
%      \> \> \> loop $\leftarrow$ {\bf False} \\ 
   \> {\bf For} $i = 1,\ldots, k$ {\bf do} \\ 
   \> \> {\bf If} $P_i \neq \emptyset$  {\bf then}\\ 
   \> \> \> select  $s \in P_i$ such that \\
   \> \> \> $\forall s' \in P_i \,\, \sum_{x_j \in P_i \cup S_i} W[x_j] \times $dist$(s,x_j) \leq \sum_{x_j \in P_i \cup S_i}W[x_j] \times$dist$(s',x_j)$  \\
   \> \> {\bf Else if} $S_i \neq \emptyset$ \\
	\> \> \> select  $s \in S_i$ such that \\
    \> \> \>  $\forall s' \in S_i \,\, \sum_{x_j \in S_i}W[x_j] \times$dist$(s,x_j) \leq \sum_{x_j \in  S_i}W[x_j] \times$dist$(s',x_j)$  \\
   \> \> $\forall x_j \in P_i \cup S_i:$  Pos$(x_j)   \leftarrow$ Pos$(x_j) \cup \{ s \}$ \\
  {\bf Return} Pos
 \end{coden}
 \end{scriptsize}
 \end{algorithm}

The input of  \textsc{SamplingPoints} (Algorithm~\ref{alg:sample}) are \texttt{nbruns}, $k$ and $E$.
Here \texttt{nbruns} denotes the number of times the overlapping clusters should be computed, $k$ denotes
the number of clusters %(or the number of metro nodes), 
and $E$ denotes the set of exchange sites. 
\texttt{Pos}$(x_j)$ denotes a set of  metro node positions  of  an exchange site $x_j$.
Initially, \texttt{Pos}$(x_j)$ is an empty set for each exchange site $x_j$.
First,  \textsc{computeOverlappingClusters} is invoked which 
 returns a set of overlapping clusters
 such that each exchange site is present in exactly two clusters. 
Recall  that  a cluster (of exchange sites) $i$ is denoted by $P_i \cup S_i$.
After that an element $s$ is selected from each cluster  
 as a possible metro node position for all the exchange sites within $P_i \cup S_i$.
Also recall that $P_i \cup S_i$ means that the selected  metro node $s$ is the cheapest/closest  for each $e \in P_i$  
and it is second cheapest for  each $e \in S_i$.  
Therefore, if $P_i \neq \emptyset$ then $s$ is selected from $P_i$ 
 such that  the sum of the weighted distances between $s$ and all the exchange sites 
of the cluster  $i$ 
is minimum. 
Otherwise it is selected from $S_i$.
%The selected site of a given cluster is then considered as a possible position of the metro node
% of the sites associated with the cluster. 
This entire procedure is repeated \texttt{nbruns}  times.
Algorithm~\ref{alg:cluster} can be seen as a variant of weighted k-means clustering algorithm. The main difference 
is that in the original algorithm  clusters are  pairwise mutually
exclusive but Algorithm~\ref{alg:cluster} computes overlapping clusters as required by the problem.

\section{Empirical Results}
\label{sec:results}
In this section we investigate  
different approaches for solving the problem of determining locations of metro nodes in Ireland.   

%More precisely, we compared an exact approach based on a mixed integer linear programming  formulation with a local search approach 
%and two hybrid approaches where for each exchange site a set of metro node positions are discarded based on  a criterion.
%The exact approach is denoted by {\scshape MIP}, the local search approach is denoted by {\scshape LS}, 
%the hybrid approaches based on $k$-cheapest neighbours  and cluster-based sampling  are denoted by {\scshape KCN}                                                                                                       and 
%{\scshape CBS} respectively.

We used {\scshape cplex} for solving all the integer linear programming formulation of the instances of the double coverage problem.
All of our  algorithms were implemented in  Java.
In our experiments, we varied the number of metro nodes between $18$ and $24$ for Ireland. 
The results are reported  for $19$, $20$, $23$ and $24$ metro nodes.  
The original problem had $1100$ exchange sites. 
In order to do  systematic experimentation, we generated $10$ instances of  smaller sizes.
These instances  are representative of the original instance since they were generated by applying k-means algorithm 
 on the original instance
 by varying $k$ (or  the number of required exchange sites)  from $100$ to $1000$ in steps of $100$.
%We also experimented with random problems. {\bf details of the generator.}
All the experiments were run  on Linux 2.6.25 x64 on a Dual Quad Core Xeon CPU with overall 11.76 GB of RAM and processor speed of 2.66GHz.

 \begin{table}[t]
  \begin{minipage}[b]{0.5\linewidth}
 \centering%
\begin{scriptsize}
\caption{Results  for  19 metro nodes.}
 \label{tab19}
 \begin{tabular}{|r|c|r|r|r|}
 \hline
    & &  & \multicolumn{2}{c|}{Time (in seconds)}   \\
 $|E|$ & optimal & CBS (GAP) &  MIP & CBS  \\
 \hline
 100 & 470,439,821 &0    \%& 0.25     &0.71  \\
 200 & 475,779,040 &0    \%& 2.29     &1.59  \\
300  & 476,876,335 &0.000\%& 36.90    &3.15  \\
 400 & 477,736,761 &0.009\%& 52.76    &4.40  \\
 500 & 476,930,454 &0.014\%& 96.89    &6.59  \\
 600 & 476,860,839 &0.013\%& 168.47   &8.49  \\
 700 & 477,825,864 &0.012\%& 1,277.27  &14.68 \\
 800 & 477,432,981 &0.033\%& 498.29   &17.43 \\
 900 & 477,608,042 &0.019\%& 817.24   &20.09 \\
1000 & 477,730,261 &0.029\%& 1,081.61  &32.78 \\
1100 & 477,789,473 &0.038\%& 1,716.27  &39.32 \\
 \hline
 \end{tabular}
 \end{scriptsize}
\end{minipage}
\hspace{0.5cm}
 \begin{minipage}[b]{0.5\linewidth}
 \centering
% \scriptsize
\begin{scriptsize}
\caption{Results for  20 metro nodes.}
 \label{tab20}
 \begin{tabular}{|r|c|r|r|r|}
 \hline
    & &  & \multicolumn{2}{c|}{Time (in seconds)}   \\
 $|E|$ & optimal & CBS (GAP) & MIP & CBS \\
 \hline
 100 & 456,703,030 &0    \%&0.27    &0.69  \\
 200 & 462,745,384 &0    \%&2.26    &1.68  \\
 300 & 463,322,390 &0.001\%&14.84   &2.88  \\
 400 & 464,669,197 &0    \%&70.46   &4.57  \\
 500 & 464,018,395 &0.001\%&115.40  &6.78  \\
 600 & 464,181,132 &0.034\%&226.34  &11.00 \\
 700 & 464,696,666 &0.006\%&405.57  &11.49 \\
 800 & 464,576,759 &0.034\%&661.01  &19.25 \\
 900 & 464,918,687 &0.039\%&1,108.49 &27.26 \\
1000 & 464,968,787 &0.028\%&1,587.39 &31.74 \\
1100 & 465,066,168 &0.034\%&8,777.75 &54.53 \\
 \hline
 \end{tabular}
\end{scriptsize} 
\end{minipage}

 \end{table}
\begin{table}[pth]
\begin{minipage}[b]{0.5\linewidth}
 \centering
 \scriptsize
\caption{Results 
 for  23 metro nodes.}
 \label{tab23}
 \begin{tabular}{|r|c|r|r|r|}
 \hline
    & &  &  \multicolumn{2}{c|}{Time (in seconds)}   \\
 $|E|$ & optimal & CBS (GAP)  & MIP & CBS  \\
 \hline
 100 & 421,504,120 & 0    \%&  0.24    &0.72  \\
 200 & 429,159,208 & 0.048\%&  7.63    &2.19  \\
 300 & 429,880,291 & 0    \%&  19.82   &2.85  \\
 400 & 430,115,650 & 0.005\%&  161.45  &5.41  \\
 500 & 430,043,176 & 0.001\%&  350.34  &9.18  \\
 600 & 429,866,927 & 0.033\%&  713.30  &10.52 \\
 700 & 430,802,977 & 0.019\%&  1,761.50 &18.17 \\
 800 & 430,755,591 & 0.011\%&  2,631.64 &21.79 \\
 900 & 430,737,706 & 0.024\%&  3,858.39 &30.84 \\
1000 & 430,918,149 & 0.008\%&  7,537.79 &39.18 \\
1100 & 430,839,593 & 0.026\%&  9,706.40 &36.61 \\
 \hline
 \end{tabular}
 
\end{minipage}
\hspace{0.5cm}
 \begin{minipage}[b]{0.5\linewidth}
 \centering
 \scriptsize
\caption{Results for 24 metro nodes.}
 \label{tab24}
 \begin{tabular}{|r|c|r|r|r|}
 \hline
    & &  & \multicolumn{2}{c|}{Time (in seconds)}   \\
 $|E|$ & optimal  & CBS (GAP)  & MIP & CBS \\
 \hline
 100 & 411,560,864 & 0	\%& 0.34    &0.71  \\
 200 & 419,088,008 & 0	\%& 9.04    &2.15  \\
 300 & 420,069,722 & 0.005\%& 23.64   &2.93  \\
 400 & 419,722,195 & 0	\%& 68.85   &4.88  \\
 500 & 419,700,725 & 0	\%& 182.44  &7.39  \\
 600 & 419,773,717 & 0.039\%& 293.45  &9.48  \\
 700 & 420,102,946 & 0.008\%& 903.16  &11.71 \\
 800 & 420,352,288 & 0.007\%& 1,532.55 &16.61 \\
 900 & 420,235,833 & 0.011\%& 1,752.59 &21.46 \\
1000 & 420,317,577 & 0.009\%& 3,657.55 &25.93 \\
1100 & 420,347,707 & 0.025\%& 4,316.71 &33.29 \\
 \hline
 \end{tabular}

\end{minipage}
 \end{table}

The results for {\scshape MIP} are presented in Tables~\ref{tab19}-\ref{tab24}.
All the experiments for this approach were run to completion.
The optimal values computed using this approach are shown under the column named ``optimal''.
The results in terms of time (in seconds) are also reported. 
In terms of time this was the most expensive approach especially when the number of exchange sites is more than $500$.

% 
% \begin{figure}[pth]
% \centering
% \includegraphics[width=0.6\textwidth]{results/kneighbours/kneighbours500time-crop.pdf}
% \caption{\label{fig:kn500time}}
% \end{figure}
% 
% \begin{figure}[pth]
% \centering
% \includegraphics[width=0.6\textwidth]{results/kneighbours/500obj-crop.pdf}
% \caption{\label{fig:kn500obj}}
% \end{figure}

Although the {\scshape KCN} approach may  solve a problem instance quicker than the {\scshape MIP} approach,
one issue  is to determine the right value of $k$. 
A small  $k$    may result in making the problem  inconsistent and a large $k$  may result in spending more time.
Also, despite having a satisfiable problem  when  $k$ is set   to a relatively lower value, it can still result in spending more time
than that required for solving the original problem, when all the positions are considered for all exchange sites. 
This  is illustrated  in Figure~\ref{fig:kn500} by plotting  the results 
for solving an instance of the double coverage problem
where the number of exchange sites is $500$ and the number of metro nodes is $20$. %time} and \ref{fig:kn500o}.
\begin{figure}[t]
\centering
%\subfigure[Behaviour of the tabu search during the feasibility stage]
\subfigure[\label{fig:kn500time}]
{
\includegraphics[width=0.48\textwidth]{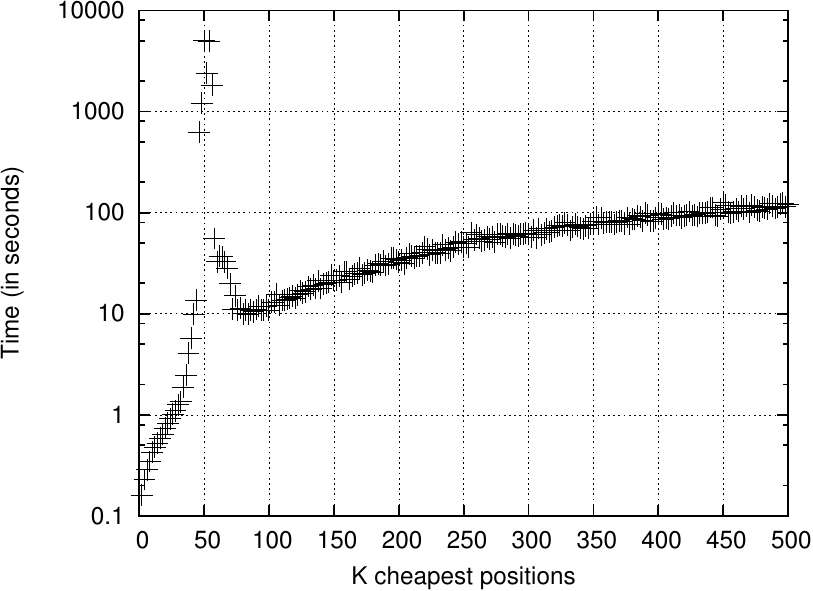}}
%\hfill
\subfigure[ \label{fig:kn500obj}]
{
\includegraphics[width=0.48\textwidth]{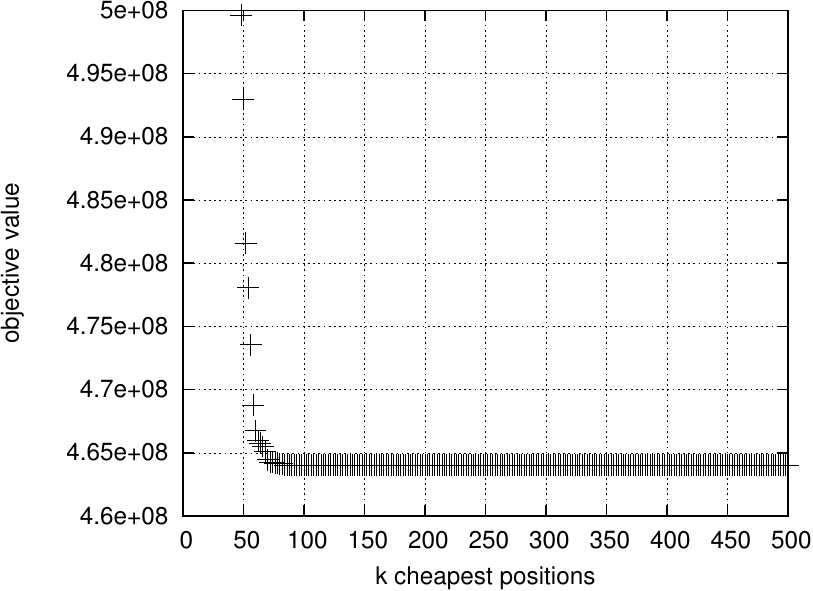}}
%\vspace{-2mm}
\caption{KCN approach  for 20 metro nodes and 500 exchange sites: (a) Time required for different values of $k$. (b)  Optimal value for different  values of $k$ \label{fig:kn500}}
\end{figure}
For both Figures~\ref{fig:kn500time} and \ref{fig:kn500obj} the $x$-axis denotes the value of $k$, which is 
 varied from $2$ to $500$ in steps of $2$.
The $y$-axis of Figure~\ref{fig:kn500time} is the time required to solve the instance and 
the $y$-axis of Figure~\ref{fig:kn500obj} is the optimal value corresponding to $k$. 
Notice that when $k$ is less than or equal to $46$ the problem is always unsatisfiable.
An interesting point to observe is that when $k$ is between $48$ and $56$ 
the time required to solve can be up to $2$ orders-of-magnitude more than that  required  when  $k$ is $500$.   
Also notice that when $k$ is set to $84$ an optimal solution is discovered and the time required to find an optimal solution is also the least. 
The results of the {\scshape KCN} approach are not reported in Tables~\ref{tab19}-\ref{tab24}  for two reasons. 
First, determining the right value of $k$ is not always possible and additionally there is an overhead. 
Second, the other hybrid approach {\scshape CBS} almost always outperforms {\scshape KCN} in terms of time without degrading the quality of the solution.

%
% \begin{floatingfigure}[r]{7.0 cm}
% \scalebox{1.00}{
%      \includegraphics[width=6.0cm]{results/cluster/cbs20-crop.pdf} }
%\caption{Domain size of positions of 20 metro nodes for 500 exchange sites. \label{fig:cbsdom}}
% \end{floatingfigure}
%%

The advantage of the  {\scshape CBS}  approach is that  
if an original instance is satisfiable then a modified instance obtained  by {\scshape CBS}  is also satisfiable. 
%This holds true even when the size of the domain of metro node positions for each exchange site is  $2$, albeit 
%the cost of the solution may increase. 
Another advantage  is that  
it does not enforce any lower bound  restriction  on the domain size of the metro node positions  for any exchange site. 
An upper bound restriction is implicitly imposed   by the  parameter \texttt{nbruns}
which is equal to the number of times Algorithm~\ref{alg:cluster} is invoked
for computing overlapping clusters. 
The application of  cluster-based sampling  for discarding a set of metro node positions for each exchange site before the search starts
 can be an overhead.
However, it pays off since the time required for search reduces significantly
without sacrificing the quality of the solution as shown in Tables~\ref{tab19}-\ref{tab24}. 
%Table~\ref{tab:detailcbs} shows the time required for this preprocessing step 
%and  search separately under the column ``bsearch'' and ``search'' respectively.
%There are also three other columns for the minimum, average and maximum domain size of the positions available for the metro node placement for exchange sites. 
For harder instances it requires almost  two orders-of-magnitude less time than that of the {\scshape MIP} approach.
Also the gap between the cost of the optimal solution and the cost of the best solution found using {\scshape CBS} is within
 $0.05\%$ of the optimal value, which is extremely low.

\section{Conclusions and Future Work}
\label{sec:con}
We have studied and solved the double coverage problem arising in 
long reach passive optical networks that are robust to single node failures.
We showed that the double coverage problem is NP-Complete.
In order to minimize the total length of optical fibre that connects metro nodes and exchange sites 
 we modeled the problem using mixed integer linear programming.
We proposed and studied a hybrid approach that performs 
cluster-based sampling as a preprocessing step in order to reduce the possiblities 
of metro node positions for exchange sites.  We showed 
that
the hybrid approach can reduce the time required to solve the double coverage problem
 by up to two orders-of-magnitude, 
 especially when 
 the size of the problem instance is large.
Our study also shows that the best solutions obtained by using the hybrid approach {\scshape CBS}
  are almost optimal.

%We also studied a local search approach that extends the well known Partitioning Around Medoids  (PAM) algorithm. 
%The extension basically allows us to  express  that each exchange site should be covered by two metro nodes instead of just one. 
%The study has shown that  this  approach almost always find the optimal solution 
% of the instances related with the Irish data. 

%In this paper no assumption is made on the reachability of the metro nodes. 
The  related work to our contribution in this paper is
 the work on dual-homing protection using MIP~\cite{Wang:04} and local search~\cite{Lee:1997:DMC:261670.261677}.
Although the comparison  with a MIP approach is done, the comparison with a local search approach 
is one of the  future works.
In  future  we would also like to extend our approaches so that they  allow us to specify  the reach of the metro nodes. 
Consequently, this may make some problem instances inconsistent. Therefore it would also be interesting 
 to extend the problem definition where 
 only a given percentage of total customers are required to be dually covered.

%\section*{Acknowledgements}

%\bibliographystyle{plain}
%\bibliography{refs}

%\begin{thebibliography}{1}

% \bibitem{CPLEX}
% http://www-01.ibm.com/software/integration/optimization/cplex-optimizer/

%\end{thebibliography}
\end{document}